\newtheorem{lemma}{Lemma}
\newtheorem{theorem}{Theorem}
\newenvironment{rcases}
  {\left.\begin{aligned}}
  {\end{aligned}\right\rbrace}
\newcommandx{\madian}[2][1=]{\todo[linecolor=red,backgroundcolor=red!25,bordercolor=red,#1]{#2}}
\newcommandx{\belinda}[2][1=]{\textcolor{red}{[Belinda:] #2}}
\newcommandx{\sinong}[2][1=]{\textcolor{red}{[Sinong:] #2}}
\title{Linformer: Self-Attention with Linear Complexity}
\author{%
Sinong Wang, Belinda Z. Li, Madian Khabsa, Han Fang, Hao Ma\\
Facebook AI, Seattle, WA\\
\texttt{\{sinongwang, belindali, hanfang, mkhabsa, haom\}@fb.com}\\
}
\begin{document}

\maketitle

\begin{abstract}
Large transformer models have shown extraordinary success in achieving state-of-the-art results in many natural language processing applications. However, training and deploying these models can be prohibitively costly for long sequences, as the standard self-attention mechanism of the Transformer uses $O(n^2)$ time and space with respect to sequence length. In this paper, we demonstrate that the self-attention mechanism can be approximated by a low-rank matrix. We further exploit this finding to propose a new self-attention mechanism, which reduces the overall self-attention complexity from $O(n^2)$ to $O(n)$ in both time and space. The resulting linear transformer, the \textit{Linformer}, performs on par with standard Transformer models, while being much more memory- and time-efficient. 
\end{abstract}

\section{Introduction}

Transformer models~\citep{vaswani2017attention} have become 
ubiquitous for wide variety of problems in natural language processing (NLP), including translation~\citep{ott2018scaling}, text classification, question answering, among others~\citep{raffel2019exploring,mohamed2019transformers}.
Over the last couple of years, 
the number of parameters in state-of-the-art NLP transformers 
has grown drastically, from the original 340 million introduced in BERT-Large to 
175 billion in GPT-3~\citep{brown2020language}. Although these large-scale models yield impressive results on wide variety of tasks,  
training and deploying such model are slow in practice. For example, the original BERT-Large model~\citep{devlin2019bert} takes four days to train on 16 Cloud TPUs, and the recent GPT-3~\citep{brown2020language} 
consumed orders of magnitude more petaflops / day to train compared to its predecessor, GPT-2~\citep{radford2019language}.
Beyond training, deploying Transformer models to real world applications is also expensive, 
usually requiring extensive distillation~\citep{hinton2015distilling} or compression.

The main efficiency bottleneck in Transformer models is its self-attention mechanism.
Here, each token's representation is updated by attending to \textit{all} other tokens in the previous layer.
This operation is key for retaining long-term information, giving Transformers the edge over recurrent models on long sequences.
However, attending to all tokens at each layer incurs a complexity of $O(n^2)$ with respect to sequence length. 
Thus, in this paper, we seek to answer the question: 
\textit{can Transformer models be optimized to avoid this quadratic operation, or is this operation required to maintain strong performance?}

Prior work has proposed several techniques for improving the efficiency of self-attention.
One popular technique is introducing sparsity into attention layers~\citep{child2019generating,qiu2019blockwise,beltagy2020longformer} by having each token attend to only a subset of tokens in the whole sequence. This reduces the overall complexity of the attention mechanism to $O(n\sqrt{n})$~\citep{child2019generating}. However, as shown in~\citet{qiu2019blockwise}, this approach suffers from a large performance drop with limited efficiency gains, i.e., a 2\% drop with only 20\% speed up. 
More recently, 
the Reformer~\citep{kitaev2019reformer} used 
locally-sensitive hashing (LSH) 
to reduce the self-attention complexity to $O(n\log(n))$.
However, in practice, 
the Reformer's efficiency gains only appear 
on sequences with length $> 2048$ (Figure 5 in~\cite{kitaev2019reformer}). 
Furthermore, the Reformer's multi-round hashing approach actually \textit{increases} the number of sequential operations, which 
further undermines their final efficiency gains. 

In this work, we introduce a novel approach for tackling the self-attention bottleneck in Transformers. Our approach is inspired by the key observation that \emph{self-attention is low rank}. More precisely, we show both theoretically and empirically that the stochastic matrix formed by self-attention can be approximated by a low-rank matrix. 
Empowered by this observation, we introduce a novel mechanism that reduces self-attention to an $O(n)$ operation in both space- and time-complexity: 
we decompose
the original scaled dot-product attention into multiple smaller attentions through linear projections, such that the combination of these operations forms a low-rank factorization of the original attention.
A summary of runtimes for various Transformer architectures, including ours, can be found in Table~\ref{tbl:summary}.

One predominant application of Transformers, that has seen the most gains, is using them as 
pretrained language models, whereby models are first pretrained with a language modeling objective on a large corpus, then finetuned on target tasks using supervised data~\citep{devlin2019bert,liu2019roberta,lewis2019bart}. 
Following~\cite{devlin2019bert}, we 
pretrain our model on BookCorpus~\citep{zhu2015aligning} plus English Wikipedia using masked-language-modeling objective. We observe similar pretraining performance 
to the standard Transformer model. We then finetune our pretrained models on three tasks from GLUE~\citep{DBLP:journals/corr/abs-1804-07461} and one sentiment analysis task, IMDB reviews~\citep{maas2011learning}. 
On these tasks, we find that our model performs comparably, or even slightly better, than the standard pretrained Transformer, while observing significant 
training and inference speedups.

\begin{table}[htbp]
    \caption{Per-layer time complexity and minimum number of sequential operations as a function of sequence length ($n$) for various architectures.}
\label{tbl:summary}
\begin{center}
\small
\begin{tabular}{@{}l c c}
\toprule
Model Architecture & Complexity per Layer & Sequential Operation \\\midrule
Recurrent & $O(n)$ & $O(n)$ \\
Transformer,~\citep{vaswani2017attention} & $O(n^2)$ & $O(1)$ \\
Sparse Tansformer,~\citep{child2019generating} & $O(n\sqrt{n})$ & $O(1)$ \\
Reformer,~\citep{kitaev2019reformer} & $O(n\log(n))$ & $O(\log(n))$ \\
Linformer & $O(n)$ & $O(1)$ \\
\bottomrule
\end{tabular}
\normalsize
\end{center}
\end{table}

\section{Backgrounds and Related works}

\subsection{Transformer and Self-Attention}

The Transformer is 
built upon the idea of Multi-Head Self-Attention (MHA), which allows the model to jointly attend to information at different positions from different representation subspaces. MHA is defined as 
\begin{equation}
\mbox{MultiHead}(Q, K, V) = \mbox{Concat}(\mbox{head}_1,\mbox{head}_2,\ldots,\mbox{head}_h)W^O,
\end{equation}
where $Q,K,V\in\mathbb{R}^{n\times d_m}$ are input embedding matrices, $n$ is sequence length, $d_m$ is the embedding dimension, and $h$ is the number of heads. Each head is defined as: 
\begin{equation}
\mbox{head}_i = \mbox{Attention}(QW_i^Q, KW_i^K, VW_i^V)
= \underbrace{\mbox{softmax}\left[\frac{QW_i^Q(KW_i^K)^T}{\sqrt{d_k}}\right]}_{P}VW_i^V,
\label{eq:selfattention}
\end{equation}
where 
$W_i^Q, W_i^K\in\mathbb{R}^{d_m\times d_k}, W_i^V\in\mathbb{R}^{d_m\times d_v}, W^O\in\mathbb{R}^{hd_v\times d_m}$ are learned matrices and $d_k, d_v$ are the hidden dimensions of the projection subspaces. For the rest of this paper, we will not differentiate between $d_k$ and $d_v$ and just use $d$. 

The self-attention defined in (\ref{eq:selfattention}) refers to a context mapping matrix $P\in\mathbb{R}^{n\times n}$. 
The Transformer uses $P$ to capture the input context for a given token, based on a combination of all tokens in the sequence.
However, computing
$P$ is expensive. It requires multiplying two $n\times d$ matrices, which is $O(n^2)$ in time and space complexity.
This quadratic dependency on the sequence length has become a bottleneck for Transformers.

\subsection{Related works}


There has been much prior literature on improving the efficiency of Transformers, especially the self-attention bottleneck. The most common techniques for model efficiency that can be applied to Transformers (some specific to Transformers, others more general-purpose) include:


\textbf{Mixed Precision}~\citep{micikevicius2017mixed}: 
Using half-precision or mixed-precision representations of floating points 
is popular in deep learning, 
and is also widely used in training Transformers~\citep{ott2019fairseq}. 
This technique can be further improved through Quantization Aware Training~\citep{jacob2018quantization,fan2020training}, where the weights are quantized during training and the gradients are approximated with the Straight-Through Estimator. This line of work is orthogonal to our approach, and we use mixed-precision training by default.

\textbf{Knowledge Distillation}~\citep{hinton2015distilling}: Knowledge distillation aims to transfer the ``knowledge" from a large teacher model to a lightweight student model. The student model is then used during inference. 
However this approach has drawbacks: It does not address speeding up the \textit{teacher} model during training, 
and moreover, student models usually suffer performance degradation compared to the teacher model. 
For example, when distilling a 12-layer BERT 
to a 6-layer 
BERT, 
the student model 
experiences an average 2.5\% performance drop on several benchmark tasks~\citep{sanh2019distilbert}.

\textbf{Sparse Attention}~\citep{child2019generating}: 
This technique improves the efficiency of self-attention by 
adding sparsity in the context mapping matrix $P$. For example, the Sparse Transformer~\citep{child2019generating} only computes $P_{ij}$ around the diagonal of matrix $P$ (instead of the all $P_{ij}$). Meanwhile, blockwise self-attention~\citep{qiu2019blockwise} divides 
$P$ into multiple blocks and only computes 
$P_{ij}$ within the selected blocks. However, these techniques also suffer 
a large performance degradation, while having only limited additional speed-up, i.e., 2\% drop with 20\% speed up. 

\textbf{LSH Attention}~\citep{kitaev2019reformer}: 
Locally-sensitive hashing (LSH) attention 
utilizes a 
multi-round hashing scheme 
when computing dot-product attention, 
which in theory 
reduces the self-attention complexity to $O(n\log(n))$. However, in practice, their complexity term has a large constant $128^2$ 
and it is 
only more efficient than the vanilla transformer when sequence length is extremely long.

\textbf{Improving Optimizer Efficiency}: 
Microbatching~\citep{huang2019gpipe} 
splits a batch into small microbatches (which can be fit into memory), and then separately runs forward and backward passes on them 
with gradient accumulation. 
Gradient checkpointing~\citep{chen2016training} saves memory by only caching activations of a subset of layers. The uncached activations 
are recomputed during backpropagation from the latest checkpoint. Both techniques trade off time for memory, and do not speed up inference.

As we've noted, most common techniques have limitations in reducing both the training and inference time/memory consumption, we investigate how to optimize the self-attention layers and introduce our approach next.

\section{Self-Attention is Low Rank\label{sec:analysis}}

In this section, we demonstrate that the self-attention mechanism, i.e., the context mapping matrix $P$, is low-rank. 

\begin{figure}
  \centering
  \includegraphics[width=5.5in]{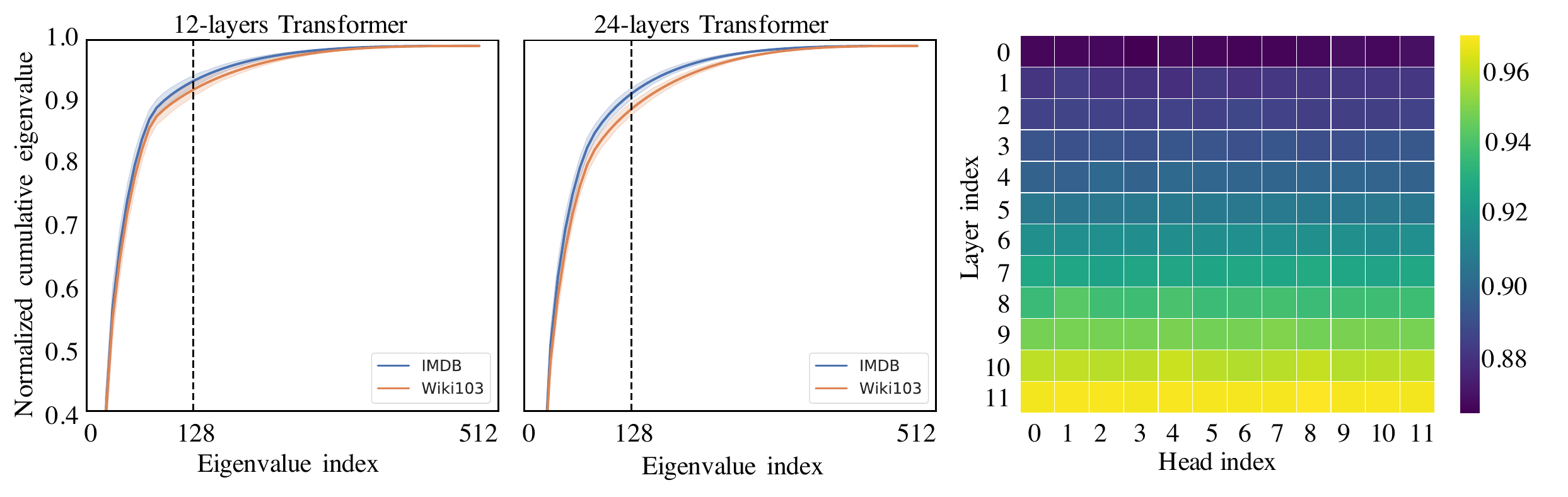}
  \caption{Left two figures are spectrum analysis of the self-attention matrix in pretrained transformer model~\citep{liu2019roberta} with $n=512$. The Y-axis is the normalized cumulative singular value of context mapping matrix $P$, and the X-axis the index of largest eigenvalue. The results are based on both RoBERTa-base and large model in two public datasets: Wiki103 and IMDB. The right figure plots the heatmap of normalized cumulative eigenvalue at the 128-th largest eigenvalue across different layers and heads in Wiki103 data.} 
  \label{fig:spectrum}
\end{figure}

We first provide a spectrum analysis of the context mapping matrix $P$. We use two pretrained transformer models, RoBERTa-base (12-layer stacked transformer) and RoBERTa-large (24-layer stacked transformer)~\citep{liu2019roberta} on two tasks: masked-language-modeling task on Wiki103~\citep{merity2016pointer} and classification task on IMDB~\citep{maas2011learning}. 
In Figure~\ref{fig:spectrum} (left), we apply singular value decomposition into $P$ across different layers and different heads of the model, and plot the normalized cumulative singular value averaged over 10k sentences.
The results exhibit a clear long-tail spectrum distribution across each layer, head and task. 
This implies that most of the information of matrix $P$ can be recovered from the first few largest singular values. 
In Figure~\ref{fig:spectrum} (right), we plot a heatmap of the normalized cumulative singular value at the 128-th largest singular value (out of 512). We observe that the spectrum distribution in 
higher layers is more skewed 
than in lower layers, 
meaning that, 
in higher layers, more information is concentrated in the largest singular values and the rank of $P$ is lower.

Below, we provide a theoretical analysis of the above spectrum results.
\begin{theorem} \emph{(self-attention is low rank)}
\label{thm:low-rank}
For any $Q,K,V\in\mathbb{R}^{n\times d}$ and $W^Q_i, W^K_i, W^V_i \in \mathbb{R}^{d\times d}$, for any column vector $w\in\mathbb{R}^{n}$ of matrix $VW^V_i$, there exists a low-rank matrix $\tilde{P}\in\mathbb{R}^{n\times n}$ such that
\begin{equation}
\Pr(\|\tilde{P}w^T-Pw^T\|<\epsilon\|Pw^T\|)>1-o(1)  \mbox{ and }  \text{rank}(\tilde{P})=\Theta(\log(n)),
\end{equation}
where the context mapping matrix $P$ is defined in (\ref{eq:selfattention}).
\end{theorem}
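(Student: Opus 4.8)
The plan is to prove this via the Johnson--Lindenstrauss (JL) lemma. One may write $P=D^{-1}\exp(A)$ with $A=QW_i^Q(KW_i^K)^T/\sqrt{d}$ and $D=\mathrm{diag}(\exp(A)\mathbf{1})$, but the only structural fact I will use is that $P$ is row-stochastic, so each row $P_i$ obeys $\|P_i\|_2\le\|P_i\|_1=1$. Fix $\epsilon>0$ and let $R\in\mathbb{R}^{k\times n}$ be a random matrix with i.i.d.\ (say Gaussian) entries scaled so that $\mathbb{E}[R^TR]=I$, where $k=\Theta(\log n)$ with a hidden constant depending on $\epsilon$. I would then take $\tilde{P}:=PR^TR$; this has $\mathrm{rank}(\tilde{P})\le k=\Theta(\log n)$ automatically, so it only remains to establish the approximation bound with probability $1-o(1)$.

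The next step is to reduce the vector estimate to a union of scalar inner-product estimates. The $i$-th coordinate of $\tilde{P}w^T$ is $\langle RP_i^T,Rw^T\rangle$ and the $i$-th coordinate of $Pw^T$ is $\langle P_i^T,w^T\rangle$, so I need all $n$ of these inner products preserved simultaneously. Combining the polarization identity $\langle Ru,Rv\rangle=\tfrac14(\|R(u+v)\|^2-\|R(u-v)\|^2)$ with the distributional JL bound $\Pr[\,|\,\|Rx\|^2-\|x\|^2\,|>\epsilon_0\|x\|^2\,]\le 2e^{-c_0\epsilon_0^2k}$, and a union bound over the $2n$ vectors $\{P_i^T\pm w^T\}_{i\in[n]}$ (rescaled to unit length), I obtain that with probability $1-4n\,e^{-c_0\epsilon_0^2k}=1-o(1)$ — taking the constant in $k$ large enough — the bound
\[
\bigl|\,\langle RP_i^T,Rw^T\rangle-\langle P_i^T,w^T\rangle\,\bigr|\;\le\;\tfrac{\epsilon_0}{2}\bigl(\|P_i^T\|_2^2+\|w^T\|_2^2\bigr)\;\le\;\tfrac{\epsilon_0}{2}\bigl(1+\|w^T\|_2^2\bigr)
\]
holds for every $i$, where row-stochasticity of $P$ is used in the last inequality. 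Squaring and summing over $i$ gives $\|\tilde{P}w^T-Pw^T\|_2\le\tfrac{\epsilon_0}{2}\sqrt{n}\,(1+\|w^T\|_2^2)$.

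The final step — and the one I expect to be the real obstacle — is upgrading this to the claimed \emph{relative} bound $\|\tilde{P}w^T-Pw^T\|<\epsilon\|Pw^T\|$, i.e.\ making the error small compared to $\|Pw^T\|$ rather than to $\|w^T\|$. Choosing $\epsilon_0=\Theta\!\bigl(\epsilon\,\|Pw^T\|_2/(\sqrt{n}\,(1+\|w^T\|_2^2))\bigr)$ does it formally, and since $k=\Theta(\log n/\epsilon_0^2)$ this keeps $k=\Theta(\log n)$ provided the ratio $\|w^T\|_2/\|Pw^T\|_2$ is treated as order $1$. That proviso is genuinely needed: $\|Pw^T\|$ can be far smaller than $\|w^T\|$ (e.g.\ when every row of $P$ is close to uniform and $w^T$ is nearly orthogonal to $\mathbf{1}$), and a purely random-projection argument cannot beat the $\sqrt{n}\,\|w^T\|$ scaling. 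The way I would handle this is to invoke the low-rank structure itself — exactly the long-tail spectrum documented in Figure~\ref{fig:spectrum} — to argue that whenever $\|Pw^T\|$ is this small, $P$ is already well approximated by its truncated SVD with $\Theta(\log n)$ components, so $\tilde{P}$ can be taken as that truncation instead; quantifying the threshold between the ``JL regime'' and the ``near-low-rank regime'' and verifying $\Theta(\log n)$ suffices on both sides is the crux. The JL side is then routine concentration plus the union bound above.
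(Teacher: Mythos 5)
Your construction is the same as the paper's: you take $\tilde{P}=PR^TR$ with a Gaussian $R\in\mathbb{R}^{k\times n}$, note $\mathrm{rank}(\tilde{P})\le k=\Theta(\log n)$, and reduce the claim to $n$ per-coordinate inner-product estimates handled by a union bound. Where you diverge is the form of the JL estimate you apply per row. The paper applies its Lemma, in the form of (\ref{lm:jl_2}), to each pair (row $u$ of $P$, column $w$ of $VW^V$) as a \emph{relative} bound on the scalar, $\Pr\bigl(\|uR^TRw^T-uw^T\|\le\epsilon\|uw^T\|\bigr)>1-2e^{-(\epsilon^2-\epsilon^3)k/4}$; since each coordinate of the error is then at most $\epsilon$ times the corresponding coordinate of $Pw^T$, summing squares over the $n$ rows gives $\|\tilde{P}w^T-Pw^T\|\le\epsilon\|Pw^T\|$ directly, and $k=5\log(n)/(\epsilon^2-\epsilon^3)$ makes the union-bound failure probability $2ne^{-(\epsilon^2-\epsilon^3)k/4}=o(1)$. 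No comparison between $\|w\|$ and $\|Pw\|$ ever enters. You instead used the polarization/additive form, which bounds each coordinate error by $\tfrac{\epsilon_0}{2}(\|P_i\|^2+\|w\|^2)$, and that is exactly why you end up with an error scaling like $\sqrt{n}\,\|w\|$ and then face the relative-versus-absolute obstacle you describe. (To be fair to you, the difficulty you identify is real if one only grants the textbook additive bound: the per-coordinate relative form the paper invokes is strictly stronger, and the gap you hit is precisely what that stronger form is papering over.)

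The genuine gap is in your final step. The theorem is stated for \emph{any} $Q,K,V$ and projection matrices, so you cannot close the argument by appealing to the empirical spectra of a pretrained RoBERTa model (Figure~\ref{fig:spectrum}); those plots are motivation, not a hypothesis of the theorem. Likewise, the fallback claim that ``whenever $\|Pw^T\|$ is small, $P$ is already within $\epsilon\|Pw^T\|$ of its rank-$\Theta(\log n)$ SVD truncation'' is unsupported: a row-stochastic $P$ can have an essentially flat spectrum (e.g.\ $P$ close to a permutation matrix, or $P=(1-\alpha)I+\alpha\frac{1}{n}\mathbf{1}\mathbf{1}^T$), while $w$ is chosen nearly orthogonal to the top singular directions, so neither the JL regime with your choice of $\epsilon_0$ (which would force $k$ to grow like $n$ up to logarithms when $\|Pw^T\|\ll\|w\|$) nor the ``near-low-rank regime'' covers such cases. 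As written, your proof establishes the absolute-error bound $\|\tilde{P}w^T-Pw^T\|\lesssim\epsilon\|w\|$ with $k=\Theta(\log n)$, but not the stated relative bound; to match the paper's conclusion you would need to apply (and justify) the per-coordinate relative JL estimate the paper uses, rather than treat $\|w\|/\|Pw^T\|$ as order one or defer to empirical low-rankness.
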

\begin{proof}
Based on the definition of the context mapping matrix $P$, we can write 
\begin{equation}
P = \mbox{ softmax}\underbrace{\left[\frac{QW_i^Q(KW_i^K)^T}{\sqrt{d}}\right]}_{A}=\exp{(A)}\cdot D_{A}^{-1},
\end{equation}
where $D_A$ is an $n\times n$ diagonal matrix. The main idea of this proof is based on the distributional Johnson–Lindenstrauss lemma~\citep{lindenstrauss1984extensions} (JL for short). We construct the approximate low rank matrix as $\tilde{P}=\exp{(A)}\cdot D_{A}^{-1}R^TR$, where $R\in\mathbb{R}^{k\times n}$ with i.i.d. entries from $N(0, 1/k)$.  We can then use the JL lemma to show that, for any column vector $w\in\mathbb{R}^{n}$ of matrix $VW_i^V$, when $k=5\log(n)/(\epsilon^2-\epsilon^3)$, we have 
\begin{equation}
\Pr\left(\|PR^TRw^T-Pw^T\|\leq \epsilon\|Pw^T\|\right) > 1-o(1).
\end{equation}
For more details, refer to the supplementary materials.
\end{proof}
Given the low-rank property of the context mapping matrix $P$, one straightforward idea is to use singular value decomposition (SVD) to approximate $P$ with a low-rank matrix $P_\text{low}$, as follows 
\begin{equation}\label{eq:svdapprox}
P \approx P_{\mbox{low}}=\sum\limits_{i=1}^{k}\sigma_iu_iv_i^T=\underbrace{\begin{bmatrix}
   \\
   u_1, \cdots,  u_k\\
   \\
\end{bmatrix}}_{k}\mbox{diag}\{\sigma_1, \cdots, \sigma_k\}\begin{rcases}
  \begin{bmatrix}
   &v_1 &\\
   &\vdots& \\
   &v_k &\\ 
\end{bmatrix}
\end{rcases}k
\end{equation}
where $\sigma_i$, $u_i$ and $v_i$ are the $i$ largest singular values and their corresponding singular vectors. Based on the results in Theorem~\ref{thm:low-rank} and the Eckart–Young–Mirsky Theorem~\citep{eckart1936approximation}, one can use
$P_\text{low}$ 
to approximate 
self-attention (\ref{eq:selfattention}) with $\epsilon$ error and $O(nk)$ time and space complexity. 
However, this approach requires performing an SVD decomposition in \textit{each} self-attention matrix, which adds additional complexity. 
Therefore, we propose another approach for 
low-rank approximation 
that avoids this added complexity.

\section{Model}

In this section, we propose a new self-attention mechanism which allows us to compute the contextual mapping $P\cdot VW_i^V$ in linear time and memory complexity with respect to sequence length.

The main idea of our proposed linear self-attention (Figure~\ref{fig:model}) is to add two linear projection matrices
$E_i, F_i\in \mathbb{R}^{n\times k}$ when computing key and value. 
We first project the original $(n\times d)$-dimensional key and value layers $KW_i^K$ and $VW_i^V$ into $(k \times d)$-dimensional projected key and value layers. We then compute an $(n\times k)$-dimensional context mapping matrix $\bar{P}$ using scaled dot-product attention.
\begin{align}
\overline{\mbox{head}_i} &= \mbox{Attention}(QW_i^Q, E_iKW_i^K, F_iVW_i^V)\notag\\
&=\underbrace{\mbox{softmax}\left(\frac{QW_i^Q(E_iKW_i^K)^T}{\sqrt{d_k}}\right)}_{\bar{P}: n\times k}\cdot\underbrace{F_iVW_i^V}_{k\times d},\label{eq:linearattenion}
\end{align}
Finally, we compute context embeddings for each head$_i$ using $\bar{P}\cdot(F_iVW_i^V)$. 
Note the above operations only require $O(nk)$ time and space complexity. 
Thus, if we can choose a very small projected dimension $k$, such that $k\ll n$, then we can significantly reduce the memory and space consumption.  The following theorem states that, when $k=O(d/\epsilon^2)$ (independent of $n$), one can approximate $P\cdot VW_i^V$ using linear self-attention (\ref{eq:linearattenion}) with $\epsilon$ error.  
\begin{figure*}
  \centering
  \includegraphics[width=5.5in]{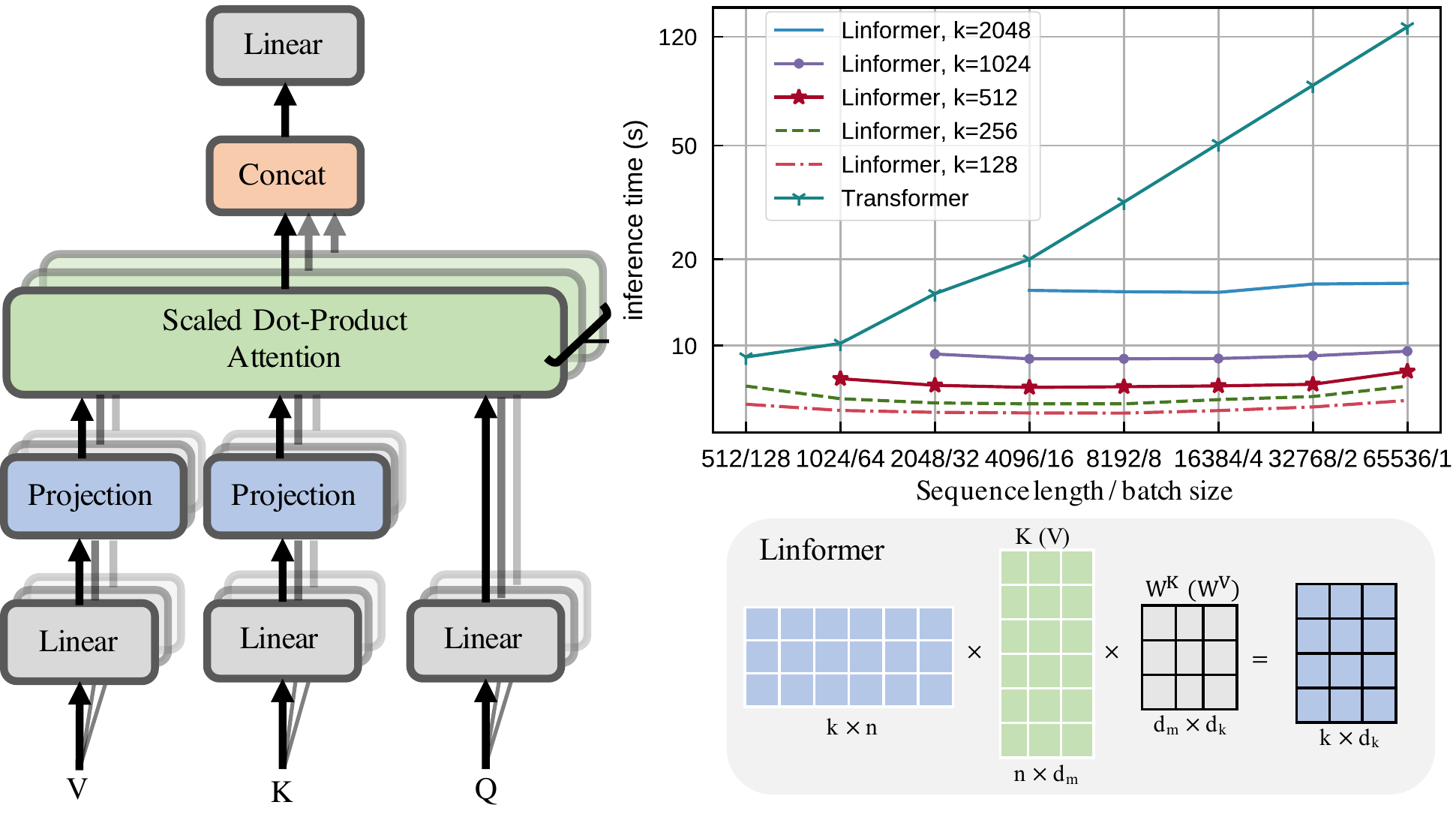}
  \caption{Left and bottom-right show architecture and example of our proposed multihead linear self-attention. Top right shows inference time vs. sequence length for various Linformer models.}
  \label{fig:model}
\end{figure*}
\begin{theorem} 
\label{thm:factorization} (\emph{Linear self-attention})
For any $Q_i, K_i, V_i\in\mathbb{R}^{n\times d}$ and $W_i^Q, W_i^K, W_i^V\in\mathbb{R}^{d\times d}$, if $k=\min\{\Theta(9d\log(d)/\epsilon^2), 5\Theta(\log(n)/\epsilon^2)\}$, then there exists matrices $E_i, F_i\in\mathbb{R}^{n\times k}$ such that, for any row vector $w$ of matrix $QW_i^Q(KW_i^K)^T/\sqrt{d}$, we have
\begin{equation}
\Pr\left(\|\mbox{\emph{softmax}}(wE_i^T)F_iVW_i^V-\mbox{\emph{softmax}}(w)VW_i^V\|\leq \epsilon\|\mbox{\emph{softmax}}(w)\|\|VW_i^V\|\right)>1-o(1)
\end{equation}
\end{theorem}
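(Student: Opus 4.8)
Fix a single row $w$ of $QW_i^Q(KW_i^K)^T/\sqrt d$ and write $p=\mathrm{softmax}(w)\in\mathbb R^{1\times n}$; since $E_i,F_i$ are shared across rows, it suffices to exhibit one pair that works for all $n$ such rows simultaneously (with the failure probabilities adding up to $o(1)$). I would split the target error $\mathrm{softmax}(wE_i^T)F_i VW_i^V-pVW_i^V$ into a \emph{linear} part and a \emph{nonlinear} part: introducing the relevant $n\times n$ operator $\Pi$ built from $E_i,F_i$ (e.g.\ $\Pi=R^TR$ for a Johnson--Lindenstrauss matrix $R$), the linear part is $p\Pi VW_i^V-pVW_i^V$, while the nonlinear part measures the gap between $\mathrm{softmax}(wE_i^T)F_i$ and $p\Pi$ — i.e.\ the cost of exponentiating the compressed logits $wE_i^T$ rather than directly compressing $\exp(w)$.

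For the linear part I would reuse the Johnson--Lindenstrauss argument behind Theorem~\ref{thm:low-rank}. Taking $R\in\mathbb R^{k\times n}$ with i.i.d.\ $N(0,1/k)$ entries, the proof of Theorem~\ref{thm:low-rank} already gives $\|pR^TRv-pv\|\le\epsilon\|pv\|$ for each column $v$ of $VW_i^V$ once $k=\Theta(\log n/\epsilon^2)$, and a net over the unit sphere of the (at most $d$-dimensional) column space of $VW_i^V$ upgrades this to $\|pR^TR\,VW_i^V-p\,VW_i^V\|\le\epsilon\|p\|\|VW_i^V\|$. Crucially, since the logit rows $w$ also lie in a $d$-dimensional subspace, one can net that subspace directly rather than union-bounding over all $n$ distinct rows, which needs only $k=\Theta(d\log d/\epsilon^2)$ (the $\log d$ being the standard overhead for a subspace embedding); taking whichever of the two counts is smaller is exactly the $\min$ appearing in the statement.

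The genuinely hard step is the nonlinear part, and here I expect the main obstacle. A generic choice $E_i=F_i=R$ does \emph{not} suffice: for a query whose attention is nearly uniform one has $wE_i^T\approx 0$, so $\mathrm{softmax}(wE_i^T)$ concentrates mass $1/k$ on $k$ coordinates, and a data-independent $F_i$ cannot redistribute it back to the uniform $(1/n)$-vector — a short variance computation shows the resulting error is $\Theta(\|VW_i^V\|/k)$, which exceeds the budget $\Theta(\epsilon\|VW_i^V\|/\sqrt n)$ once $n$ is large. So I would instead construct $E_i$ from a (leverage-score) row-sampling-and-rescaling of the rank-$\le d$ logit matrix, with $F_i$ the matching reconstruction map, so that $wE_i^T$ still determines $w$ well enough to recover $\mathrm{softmax}(w)$ within $\epsilon$ (the $\Theta(d\log d)$ sample count again being the matrix-Chernoff cost of covering a $d$-dimensional subspace), falling back to the JL route when $d\gtrsim n$; then I would convert the logit-reconstruction error into a $\mathrm{softmax}$ error via the $1$-Lipschitz continuity of $\mathrm{softmax}$, taking care that the two normalizing denominators $\sum\exp(\cdot)$ on the compressed and uncompressed sides also match — keeping that normalization under control is the most delicate point, and it is precisely what forces $E_i,F_i$ to be chosen structurally rather than generically. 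Combining the two parts and taking a union bound over the $\le n$ rows (or invoking the already-uniform subspace-embedding bound) yields failure probability $o(1)$ and the claim; the linear/JL half and the final union bound are routine by comparison.
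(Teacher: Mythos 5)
Your decomposition into a linear (JL) part plus a nonlinear part, and your observation that the naive choice $E_i=F_i=R$ cannot reproduce a near-uniform $\mathrm{softmax}(w)$, are sensible — but your proposal leaves exactly the nonlinear step unproved, and that step is the actual content of the theorem. The paper's route is different from yours on both halves. Its construction is $E_i=\delta R$, $F_i=e^{-\delta}R$ with $R$ Gaussian and $\delta$ a tiny scalar; the exponential is tamed not by Lipschitz continuity of softmax but by Lipschitz continuity of $\exp$ on a compact region, which for small $\delta$ gives $\|\exp(xE_i^T)-\exp(x)R^T\|=o(\|\exp(x)\|)$, so that by the triangle inequality the error reduces to the plain JL term $\|\exp(x)R^TRy^T-\exp(x)y^T\|$. (The paper in fact carries out the argument for the unnormalized $\exp$ rather than for softmax, so the normalization issue you flag is sidestepped there rather than resolved.) The $n$-independent choice $k=\Theta(d\log d/\epsilon^2)$ is then obtained not by leverage scores or a net over a subspace but by exploiting $\mathrm{rank}(A)=d$: fix a full-rank $2d$-row submatrix $A^s$, write every row error as $\Gamma_i\bigl(\exp(A^sE^T)FV-\exp(A^s)V\bigr)$ for a fixed coefficient matrix $\Gamma$, and bound it via $\|\cdot\|_2\le\|\cdot\|_F$ together with the per-row bound on the $2d$ rows of $A^s$.

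The genuine gap in your plan is the step you yourself call the most delicate point: you assert that a leverage-score sampling $E_i$ with a ``matching reconstruction map'' $F_i$ recovers $\mathrm{softmax}(w)$ to within $\epsilon$, but you give no construction of $F_i$ and no argument that one exists. The obstruction is structural: $F_i$ acts \emph{after} the $k$-dimensional softmax, so it is a fixed linear map applied to $\exp(w_S)/\sum_{j\in S}e^{w_j}$ (up to rescaling), whereas the target is $\exp(w)/\sum_{j=1}^{n}e^{w_j}$; the ratio of the two normalizers depends nonlinearly on $w$ and cannot be corrected by any data-independent linear $F_i$, even when the sampled coordinates determine $w$ exactly through its $d$-dimensional row space. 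Likewise, ``$1$-Lipschitz continuity of softmax'' does not convert a logit-reconstruction error into the required bound, because the two softmaxes live in different dimensions ($k$ versus $n$). Some mechanism playing the role of the paper's $\delta$-scaling — which effectively linearizes the exponential so the statement genuinely becomes a JL statement — is missing from your sketch, so while your linear half and your $\min\{\Theta(d\log d/\epsilon^2),\Theta(\log n/\epsilon^2)\}$ accounting parallel the paper, the nonlinear half is not established.
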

\begin{proof}
The main idea of proof is based on the distributional Johnson–Lindenstrauss lemma~\citep{lindenstrauss1984extensions}. We first prove that for any row vector $x\in\mathbb{R}^{n}$ of matrix $QW_i^Q(KW_i^K)^T/\sqrt{d_k}$ and column vector $y\in\mathbb{R}^{n}$ of matrix $VW_i^V$, 
\begin{align}
\Pr\left(\|\exp(xE_i^T)F_iy^T-\exp(x)y^T\|\leq \epsilon\|\exp(x)y^T\|\right)>1-2e^{-(\epsilon^2-\epsilon^3)k/4},\label{eq:main:exp_jl}
\end{align}
where $E_i=\delta R$ and $F_i=e^{-\delta}R$, where $R\in\mathbb{R}^{k\times n}$ with i.i.d. entries from $N(0, 1/k)$ and $\delta$ is a small constant. Applying the result in (\ref{eq:main:exp_jl}) to every row vector of matrix $A$ and every column vector of matrix $V$, one can directly prove that, for any row vector $A_i$ of matrix $A$,
\begin{align}\label{eq:submatrix_jl}
\Pr\left(\|\exp(A_iE_i^T)F_iV-\exp(A_i)V\|\leq \epsilon\|\exp(A_i)V\|\right)>1-o(1),
\end{align}
by setting $k=5\log(nd)/(\epsilon^2-\epsilon^3)$. This result does not utilize the low rank property of matrix $A$ (rank($A$)=$d$) and the resultant $k$ has a dependency on sequence length $n$. We will further utlize the fact that rank($A$)=$d$ to prove the choice of $k$ can be constant and independent of sequence length $n$. For more details, refer to the supplementary materials.
\end{proof}

In Figure~\ref{fig:model} (top right), we plot the inference speed of Linformer and standard Transformer versus sequence length, while holding the total number of tokens fixed. We see that while standard Transformer becomes slower at longer sequence lengths, the Linformer speed remains relatively flat and is significantly faster at long sequences.

\paragraph{Additional Efficiency Techniques} Several additional techniques can be introduced on top of Linformer to further optimize for both performance and efficiency:

\textbf{Parameter sharing between projections:} One can share parameters for the 
linear projection matrices $E_i, F_i$ across layers and heads. In particular, we experimented with 3 levels of sharing:  
\begin{itemize}
    \item Headwise sharing: for each layer, we share two projection matrices $E$ and $F$ such that $E_i= E$ and $F_i=F$ across all heads $i$.  
    \item Key-value sharing: we do headwise sharing, with the additional constraint of sharing the key and value projections. For each layer, we create a single projection matrix $E$ such that $E_i = F_i = E$ for each key-value projection matrix across all head $i$.
    \item Layerwise sharing: we use a single projection matrix $E$ across \textit{all layers}, for all heads, and for both key and value. 
\end{itemize}
For example, in a 12-layer, 12-head stacked Transformer model, headwise sharing, key-value sharing and layerwise sharing will introduce 24, 12, and 1 distinct linear projection matrices, respectively. 


\textbf{Nonuniform projected dimension:} One can choose a 
different projected dimension $k$ for different heads and layers. As shown in Figure~\ref{fig:spectrum} (right), the contextual mapping matrices in different heads and layers have distinct spectrum distributions, and heads in higher layer tend towards a more skewed distributed spectrum (lower rank). This implies one can choose a smaller projected dimension $k$ for higher layers. 

\textbf{General projections:} One can also choose different kinds of low-dimensional projection methods instead of a simple 
linear projection. For example, one can choose mean/max pooling, or convolution where the kernel and stride is set to $n/k$. The convolutional functions contain parameters that require training.

\section{Experiments}

In this section, we present experimental results for the 
the techniques described above. We analyze the techniques one-by-one and explore how they 
impact performance. 

\subsection{Pretraining Perplexities}

We first compare the pretraining performance of our proposed architecture against RoBERTa~\citep{liu2019roberta}, which is based on the Transformer. Following~\cite{devlin2019bert}, we use BookCorpus~\citep{zhu2015aligning} plus English Wikipedia as our pretraining set (3300M words).
All models are pretrained with the masked-language-modeling (MLM) objective, and the training for all experiments are parallelized across 64 Tesla V100 GPUs with 250k updates.

\textbf{Effect of projected dimension:} We 
experiment with various values for the projected dimension $k$. (We use the same $k$ across all layers and heads of Linformer.)
In the Figure~\ref{fig:perplexity}(a) and (b), we plot the validation perplexity curves for both the standard Transformer and the Linformer across different $k$, for maximum sequence lengths $n=512$ and $n=1024$. 
As expected, 
the Linformer performs better 
as projected dimension $k$ increases. 
However, even at $k=128$ for $n=512$ and $k=256$ for $n=1024$, Linformer's performance is already nearly on par with 
the original Transformer.
\begin{figure*}
  \centering
  \includegraphics[width=5.5in]{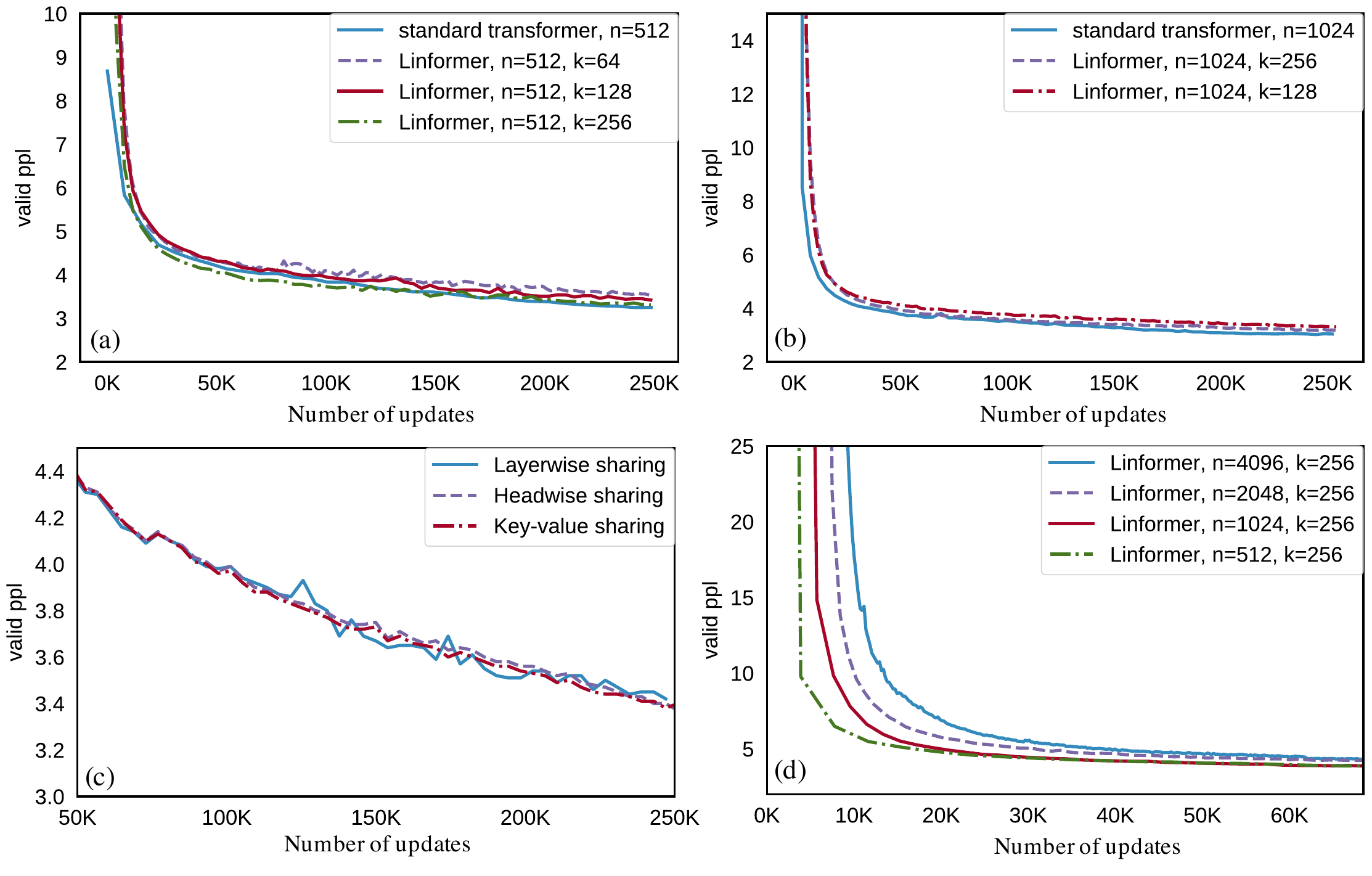}
  \caption{Pretraining validation perplexity versus number of updates.}
  \label{fig:perplexity}
\end{figure*}
\textbf{Effect of sharing projections:} In 
Figure~\ref{fig:perplexity}(c), we plot the validation perplexity curves for the three parameter sharing strategies (headwise, key-value, and layerwise) with $n=512$. 
Note that when we use just a single projection matrix (i.e. for layerwise sharing), the resulting Linformer model's validation perplexity almost matches that of the 
the non-shared model.
This suggests that we can decrease the number of additional parameters in our model, and consequently, it's memory consumption, without much detriment to performance.

\textbf{Effect of longer sequences:} We evaluate the effect of sequence length during Linformer pretraining. 
In the Figure~\ref{fig:perplexity}(d), we plot the validation perplexity for Linformer with $n\in\{512, 1024, 2048, 4096\}$, holding projected dimension $k$ fixed at $256$.
Note that as sequence length increases, even though our projected dimension is fixed, the final perplexities after convergence remain about the same.  
This further empirically supports our assertion that the Linformer is linear-time. 

\begin{table}[htbp]
\caption{Dev set results on benchmark natural language understanding tasks. 
The RoBERTa-base model here is pretrained with same corpus as BERT.}
\label{tbl:downstream_results}
\begin{center}
\small
\begin{tabular}{@{}l l  c  c  c  c  c  c @{}}
\toprule
$n$ & Model & SST-2 & IMDB & QNLI & QQP & Average \\\midrule
\multirow{9}{*}{512} &  
\cite{liu2019roberta}, RoBERTa-base
 & 93.1 & 94.1 & 90.9 & \textbf{90.9} & 92.25 \\
 & Linformer, 128 & 92.4 & 94.0 & 90.4 & 90.2 & 91.75 \\
 & Linformer, 128, shared kv & \textbf{93.4} & 93.4 & 90.3 & 90.3 & 91.85 \\
 & Linformer, 128, shared kv, layer & 93.2 & 93.8 & 90.1 & 90.2 & 91.83\\
 & Linformer, 256 & 93.2 & 94.0 & 90.6  & 90.5 & 92.08 \\
 & Linformer, 256, shared kv & 93.3 & 93.6 & 90.6 & 90.6 & 92.03 \\
 & Linformer, 256, shared kv, layer & 93.1  & 94.1 & \textbf{91.2} & 90.8 & \textbf{92.30} \\\midrule
\multirow{2}{*}{512} & \cite{devlin2019bert}, BERT-base & 92.7 & 93.5 & 91.8 & 89.6 & 91.90 \\
 & \cite{sanh2019distilbert}, Distilled BERT & 91.3 & 92.8 & 89.2 & 88.5 & 90.45\\\midrule
\multirow{3}{*}{1024} & Linformer, 256 & 93.0 & 93.8 & 90.4 & 90.4 & 91.90\\
 & Linformer, 256, shared kv & 93.0 & 93.6 & 90.3 & 90.4 & 91.83\\
 & Linformer, 256, shared kv, layer & 93.2 & \textbf{94.2} & 90.8 & 90.5 & 92.18 \\\bottomrule
\end{tabular}
\normalsize
\end{center}
\end{table}

\subsection{Downstream Results}

Thus far, we have only examined the pretraining perplexities of our model.
However, we wish to show that our conclusions hold after \textit{finetuning} on downstream tasks.
We finetune our Linformer on IMDB~\citep{maas2011learning} and SST-2~\citep{socher2013recursive} (sentiment classification), as well as QNLI (natural language inference)~\citep{rajpurkar2016squad}, and QQP (textual similarity)~\citep{chen2018quora}
We do the same with RoBERTa, 
12-layer BERT-base and 6-layer distilled BERT. All of our models, including the Transformer baselines, were pretrained with the same objective, pretraining corpus, and up to 250k updates (although our Linformer takes much less wall-clock time to get to 250k updates, and was consequently trained for less time). Results are listed in Table~\ref{tbl:downstream_results}.

We observe that the Linformer model ($n=512, k=128$) has comparable downstream performance to the RoBERTa model, and in fact even slightly outperforms it 
at $k=256$. 
Moreover, we note that although the Linformer's layerwise sharing strategy 
shares a single projection matrix across the entire model, 
it actually exhibits the best accuracy result of all three 
parameter sharing strategies.
Furthermore, the Linformer pretrained with longer sequence length $(n=1024, k=256)$ 
has similar results to the one pretrained with shorter length $(n=512, k=256)$,
this empirically supports the notion that \textit{the performance of Linformer model is mainly determined by the projected dimension $k$ instead of the ratio $n/k$.}

\subsection{Inference-time Efficiency Results}

In Table~\ref{tbl:inference_speed},
we report the inference efficiencies of Linformer (with layerwise sharing) against a standard Transformer. 
We benchmark both models' inference speed and memory on a 16GB Tesla V100 GPU card. 
We randomly generate data up to some sequence length $n$ and perform a full forward pass on a multiple batches. We also choose batch size based on the maximum batch size that can fit in memory, and our memory savings are computed based on this number.

\begin{table}[!htb]
    \begin{minipage}{.5\linewidth}
      \centering
        \begin{tabular}{@{~}r l l l l l@{~}} 
        \toprule
        \multirow{2}{*}{length $n$} & \multicolumn{5}{c}{projected dimensions $k$} \\
         & 128 & 256 & 512 & 1024 & 2048\\\midrule
        512 & 1.5x & 1.3x & - & - & -\\
        1024 & 1.7x & 1.6x  & 1.3x & - & - \\
        2048 & 2.6x  & 2.4x  & 2.1x & 1.3x & - \\
        4096 & 3.4x  & 3.2x  & 2.8x  & 2.2x  & 1.3x \\
        8192 & 5.5x & 5.0x& 4.4x & 3.5x  & 2.1x \\
        16384 & 8.6x & 7.8x& 7.0x & 5.6x & 3.3x\\
        32768 & 13x & 12x & 11x & 8.8x & 5.0x \\
        65536 & 20x & 18x & 16x & 14x& 7.9x \\
        \bottomrule
        \end{tabular}
    \end{minipage}%
    \begin{minipage}{.5\linewidth}
      \centering
        \begin{tabular}{@{~}r l l l l l@{~}} 
        \toprule
        \multirow{2}{*}{length $n$}& \multicolumn{5}{c}{projected dimensions $k$} \\
        & 128 & 256 & 512 & 1024 & 2048\\\midrule 
        512 & 1.7x & 1.5x & - & - & -\\
        1024 & 3.0x & 2.9x  & 1.8x & - & - \\
        2048 & 6.1x & 5.6x  & 3.6x & 2.0x & - \\
        4096 & 14x & 13x & 8.3x  & 4.3x  & 2.3x \\
        8192 & 28x & 26x & 17x & 8.5x & 4.5x \\
        16384 & 56x & 48x & 32x & 16x & 8x\\
        32768 & 56x & 48x & 36x & 18x & 16x \\
        65536 & 60x & 52x & 40x & 20x & 18x \\
        \bottomrule
        \end{tabular}
    \end{minipage} 
    \caption{Inference-time efficiency improvements of the Linformer over the Transformer, across various projected dimensions $k$ and sequence lengths $n$. 
    Left table shows time saved. 
    Right table shows memory saved.
    }
    \label{tbl:inference_speed}
\end{table}

From Table~\ref{tbl:inference_speed}, we see that even with 
$n = 512$ and 
$k = 128$, 
Linformer has 
$1.5\times$ faster inference time and 
allows for
a $1.7\times$ larger maximum batch size 
than the Transformer.
As sequence length increases, the inference-time speed-up and memory savings are even more dramatic.
We also plot inference times of both Linformer and Transformer on the 100 data samples in the top right of Figure~\ref{fig:model}.

\section{Conclusion}

Transformer models are notoriously slow to train and deploy 
in practice since their self-attention operations have $O(n^2)$ time and space complexity with respect to sequence length $n$. In this paper, we demonstrate, both theoretically and empirically, that the stochastic matrix formed by self-attention mechanism is low-rank. We further leverage this observation to propose a new, highly efficient self-attention mechanism. 
Through a combination of theoretical and empirical analysis, we demonstrate that our proposed approach is $O(n)$ 
with respect to sequence length.

\section*{Broader Impact}
Our work focuses on making Transformers more efficient by introducing a mechanism that reduces self-attention to linear-time complexity. Potential positive impacts of efficient transformers include increasing the accessibility of our models, both for deployment on devices, as well as during training for research purposes. It also has potential impact on training transformer on images since we can support very long sequences. 
Furthermore, there are positive environmental benefits associated with decreasing the power consumption of models.
As such, we see no immediate negative ethical or societal impacts of our work
beyond what applies to other core building blocks of deep learning.
 
\bibliography{references}
\bibliographystyle{iclr2020_conference}
\appendix
\newpage
\section{Proof of Theorem 1}
\begin{proof}
The main proof idea is based on the distributional Johnson–Lindenstrauss lemma~\citep{lindenstrauss1984extensions} (JL, for short), the following version is from~\citep{arriaga2006algorithmic}.
\begin{lemma}
Let $R$ be an $k\times n$ matrix, $1 \leq k \leq n$, with i.i.d. entries from $N(0, 1/k)$. For any $x, y \in\mathbb{R}^n$, we have 
\begin{align}
&\Pr\left(\|Rx\|\leq (1+\epsilon)\|x\|\right)>1-e^{-(\epsilon^2-\epsilon^3)k/4},\label{lm:jl_1}\\
&\Pr\left(\|xR^TRy^T-xy^T\|\leq \epsilon\|xy\|\right)>1-2e^{-(\epsilon^2-\epsilon^3)k/4}.\label{lm:jl_2}
\end{align}
\end{lemma}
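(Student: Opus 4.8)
The plan is to reduce both inequalities to a single Chernoff (moment-generating-function) bound for a chi-squared random variable. By the positive homogeneity of both sides of (\ref{lm:jl_1}) I would first normalize so that $\|x\|=1$. Since the entries of $R$ are i.i.d.\ $N(0,1/k)$, each coordinate $(Rx)_i=\sum_j R_{ij}x_j$ is $N(0,\|x\|^2/k)=N(0,1/k)$ and the $k$ coordinates are independent, so $k\|Rx\|^2\sim\chi^2_k$ with $\mathbb{E}\|Rx\|^2=1$. This distributional identity is the common engine for both parts.

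For (\ref{lm:jl_1}) I would note the event inclusion $\{\|Rx\|>(1+\epsilon)\|x\|\}\subseteq\{\|Rx\|^2>(1+\epsilon)\|x\|^2\}$, valid because $(1+\epsilon)^2>1+\epsilon$, so it suffices to control the upper tail of $\|Rx\|^2$. Applying Markov's inequality to $e^{t k\|Rx\|^2}$ with the chi-squared MGF $\mathbb{E}e^{t\chi^2_k}=(1-2t)^{-k/2}$ for $t<1/2$ gives $\Pr(\|Rx\|^2\ge 1+\epsilon)\le\min_{0<t<1/2}(1-2t)^{-k/2}e^{-tk(1+\epsilon)}$. The minimizer is $t=\tfrac{\epsilon}{2(1+\epsilon)}$, producing the exponent $\tfrac{k}{2}(\epsilon-\ln(1+\epsilon))$, and the elementary inequality $\epsilon-\ln(1+\epsilon)\ge\tfrac12(\epsilon^2-\epsilon^3)$ on $(0,1)$, read off from the Taylor series of $\ln(1+\epsilon)$, converts this into exactly the stated bound $e^{-(\epsilon^2-\epsilon^3)k/4}$.

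For (\ref{lm:jl_2}), where the right-hand side should be read as $\epsilon\|x\|\|y\|$, I would pass through the polarization identity. Writing $xR^TRy^T=\langle Rx,Ry\rangle$ and $xy^T=\langle x,y\rangle$ and normalizing $\|x\|=\|y\|=1$, polarization gives $\langle Rx,Ry\rangle-\langle x,y\rangle=\tfrac14\big[(\|R(x+y)\|^2-\|x+y\|^2)-(\|R(x-y)\|^2-\|x-y\|^2)\big]$. I would then apply the chi-squared concentration of the previous step, in both its upper- and lower-tail forms, to the two fixed vectors $x+y$ and $x-y$; on the resulting good event $\big|\|Ru\|^2-\|u\|^2\big|\le\epsilon\|u\|^2$ for $u\in\{x+y,x-y\}$, and using $\|x+y\|^2+\|x-y\|^2=2(\|x\|^2+\|y\|^2)=4$, the triangle inequality bounds the deviation by $\tfrac14\epsilon\cdot 4=\epsilon$, with a union bound over the two tails supplying the constant in front of the exponential. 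To recover the sharp prefactor $2$ stated in the lemma, the cleanest route is instead to bound the upper and lower tails of the quadratic form $\langle Rx,Ry\rangle=\sum_i (Rx)_i(Ry)_i$ directly, since the summands $(Rx)_i(Ry)_i$ are i.i.d., each tail then contributing a single exponential.

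The main obstacle I anticipate is not the distributional reduction, which is routine, but the tight exponent bookkeeping. The clean constant $(\epsilon^2-\epsilon^3)/4$ appears only after choosing the optimal Chernoff parameter exactly and invoking the one-sided logarithmic inequality $\epsilon-\ln(1+\epsilon)\ge\tfrac12(\epsilon^2-\epsilon^3)$ together with its analogue for the lower tail; keeping these comparisons sharp, and correctly propagating the polarization constants and the union-bound factor so that the claimed prefactors ($1$ and $2$, respectively) emerge, is the delicate part of the argument.
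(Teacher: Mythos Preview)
The paper does not prove this lemma at all: it is stated as a known version of the distributional Johnson--Lindenstrauss lemma and attributed to Arriaga and Vempala, then used as a black box in the proofs of Theorems~1 and~2. Your proposal therefore goes well beyond what the paper does, supplying the classical argument that the cited reference contains.

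Your sketch is the standard one and is correct. The reduction $k\|Rx\|^2\sim\chi^2_k$ via rotational invariance, the Chernoff bound with the optimal parameter $t=\epsilon/(2(1+\epsilon))$, and the Taylor comparison $\epsilon-\ln(1+\epsilon)\ge\tfrac12(\epsilon^2-\epsilon^3)$ on $(0,1)$ together yield exactly the exponent in (\ref{lm:jl_1}). For (\ref{lm:jl_2}) your polarization route is also the textbook one, and your self-diagnosis of the remaining issue is accurate: polarization plus a union bound over two vectors and two tails naturally produces a prefactor $4$, not $2$, so recovering the stated constant does require the direct analysis of the bilinear form $\sum_i (Rx)_i(Ry)_i$ that you outline (or, alternatively, accepting a looser constant, which is immaterial for every downstream use in the paper). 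There is no substantive gap in your plan.
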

For simplicity, we will omit the subscript $i$ for matrix $W_i^K$, $W_i^Q$, $W_i^V$, $E_i$ and $F_i$. We will regard $Q$ as $QW^Q$, $K$ as $KW^K$ and $V$ as $VW^V$. Define
\begin{equation}
A = \frac{QW_i^Q(KW_i^K)^T}{\sqrt{d}}
\end{equation}
Based on the definition of contextual mapping matrix $P$, we have
\begin{align}
P=&\mbox{ softmax}\left[\frac{QW_i^Q(KW_i^K)^T}{\sqrt{d}}\right]\notag\\
=&\exp{(A)}\cdot D_{A}^{-1},
\end{align}
where $D_A$ is an $n\times n$ diagonal matrix such that
\begin{equation}
(D_A)_{ii} = \sum\limits_{j=1}^n\exp{\left(A_{ji}\right)}
\end{equation}
Here we provide a constructive proof. Given any approximation error $\epsilon>0$, define the following matrix.
\begin{equation}
\tilde{P}=\exp{(A)}\cdot D_{A}^{-1}R^TR,
\end{equation}
where $R$ be an $k\times n$ matrix, $1 \leq k \leq n$, with i.i.d. entries from $N(0, 1/k)$. Clearly the rank of matrix $\tilde{P}$ satisifies
\begin{equation}
\mbox{rank}(\tilde{P})\leq \mbox{rank}(R) = k.
\end{equation}
We further show that, when $k=\log(n)$, we have that, for any column vector $w\in\mathbb{R}^n$,
\begin{equation}
\Pr\left(\|\tilde{P}h - Ph\|\leq \epsilon\|Ph\|\right)>1-o(1).
\end{equation}
This concludes the theorem. For any row vector $u\in\mathbb{R}^n$ of matrix $P$ and any column vector $w\in\mathbb{R}^n$ of matrix $VW^V$, applying the JL Lemma, we can obtain
\begin{equation}
\Pr\left(\|uR^tRw^T - uw^T\|\leq \epsilon\|uw^T\|\right)>1-2e^{-(\epsilon^2-\epsilon^3)k/4}.
\end{equation}
Therefore, we have
\begin{align}
\Pr\left(\|\tilde{P}w^T - Pw^T\|\leq \epsilon\|Pw^T\|\right)=&\Pr\left(\|PR^TRw^T - Pw^T\|\leq \epsilon\|Pw^T\|\right)\notag\\
\overset{(a)}{\geq}&1 - \sum\limits_{x\in P}\Pr\left(\|xR^TRw^T - xw^T\| > \epsilon\|xw^T\|\right)\notag\\
\overset{(b)}{>}&1-2ne^{-(\epsilon^2-\epsilon^3)k/4}.
\end{align}
The above, step (a) is based on the union bound. The step (b) is utilizing the result of JL Lemma. Let $k=5\log(n)/(\epsilon^2-\epsilon^3)$, then theorem follows.
\end{proof}

\section{Proof of Theorem 2}

\begin{proof}
Define $E=\delta R$ and $F=e^{-\delta}R$, where $R\in\mathbb{R}^{n\times k}$ with i.i.d. entries from $N(0, 1/k)$, $\delta$ is a constant with $\delta=1/2^n$. We will first prove that for any row vector $x\in\mathbb{R}^{n}$ of matrix $QK^T$ and column vector $y\in\mathbb{R}^{n}$ of matrix $V$, 
\begin{align}
\Pr\left(\|\exp(xE^T)Fy^T-\exp(x)y^T\|\leq \epsilon\|\exp(x)y^T\|\right)>1-2e^{-(\epsilon^2-\epsilon^3)k/4}.\label{eq:exp_jl}
\end{align}
Based on the triangle inequality, we have
\begin{align}
\|\exp(xE^T)Fy\exp(x)y^T\|&\leq \|\exp(xE^T)Fy-\exp(x)R^TRy\| + \|\exp(x)R^TRy-\exp(x)y^T\|\notag\\
&\overset{(a)}{\leq} (1+\epsilon)\|y\|\|\exp(xE^T)-\exp(x)R^T\| + \|\exp(x)R^TRy-\exp(x)y^T\| \notag\\
&\overset{(b)}{\leq} \|\exp(x)R^TRy-\exp(x)y^T\| + o(\|\exp(x)\|\|y\|)\notag\\
&\overset{(c)}{\leq} \epsilon\|\exp(x)\|\|y\| + o(\|\exp(x)\|\|y\|)
\end{align}
The above, step (a) is based on the Cauchy inequality and JL Lemma in (\ref{lm:jl_1}). The step (b) utilizes the fact that exponential function is Lipchitz continuous in a compact region. Then we can choose a small enough $\delta$, i.e., $\delta=\theta(1/n)$ such that
\begin{equation}
    \|\exp(\delta xR) - \exp(\delta x)R\|=o(\|\exp(x)\|)
\end{equation}
The step (c) is based on the JL Lemma defined in (\ref{lm:jl_2}).

Applying the result in (\ref{eq:exp_jl}) to every row vector of matrix $A$ and every column vector of matrix $V$, one can directly prove that, for any row vector $A_i$ of matrix $A$,
\begin{align}\label{eq:submatrix_jl}
\Pr\left(\|\exp(A_iE^T)FV-\exp(A_i)V\|\leq \epsilon\|\exp(A_i)\|\|V\|\right)>1-o(1),
\end{align}
by setting $k=5\log(nd)/(\epsilon^2-\epsilon^3)$. This result does not utilize the low rank property of matrix $A$ (rank($A$)=$d$) and the resultant $k$ has a dependency on sequence length $n$. We will further prove the choice of $k$ can be constant and independent of sequence length $n$. 

Based on the fact that rank($A$)=$d$, we can find a row submatrix $A_s\in\mathbb{R}^{2d\times d}$ of matrix $\exp(AE^T)FH$ such that rank($A_s$)=$d$. Applying the result in (\ref{eq:exp_jl}) to every row vector of matrix $A_s$ and every column vector of matrix $V$, and $k=9\log(d)/(\epsilon^2-\epsilon^3)$, we can obtain that, for any row vector $A_i^s$ of matrix $A^s$, 
\begin{align}
\Pr\left(\|\exp(A_i^sE^T)FV-\exp(A_i^s)V\|\leq \epsilon\|\exp(A_i^s)\|\|V\|\right)>1-o(1),
\end{align}
Furthermore, define the matrix $\Gamma\in\mathbb{R}^{n\times 2d}$ as
\begin{equation}
\Gamma=\begin{bmatrix}
\exp(AE^T)FV\\
\exp(A)V
\end{bmatrix}\cdot\begin{bmatrix}
\exp(A_sE^T)FV\\
\exp(A_s)V
\end{bmatrix}^{-1}
\end{equation}
We have that, for any row vector $A_i$ of matrix $A$, $1\leq i \leq n$.
\begin{align}
\|\exp(A_iE^T)FV-\exp(A_i)V\|=&\|\Gamma_i\exp(A^sE^T)FV-\Gamma_i\exp(A^s)V\|\notag\\
\overset{(a)}{\leq}&\left\|[\exp(A^sE^T)FV-\exp(A^s)V]^T\right\|_2\|\Gamma_i\|\notag\\
\overset{(b)}{\leq}&\Theta(d)\|\exp(A^sE^T)FV-\exp(A^s)V\|_F\notag\\
=&\Theta(d) \sum\limits_{i=1}^{2d}\|\exp(A_i^sE^T)FV-\exp(A_i^s)V\|\notag\\
\overset{(c)}{\leq} & \epsilon\Theta(d)\sum\limits_{i=1}^{2d}\|\exp(A_i^s)\|\|V\|\notag\\
\leq &\epsilon\Theta(d)\|\exp(A^s)\|\|V\|\notag
\end{align}
The above, step (a) utilizes the inequality $\|Ax\|\leq\|A\|_2\cdot\|x\|$, where $\|A\|_2=\sqrt{\lambda_{\max}(A^TA})$ ($\lambda_{\max}(\cdot)$ is the largest eigenvalue) is the spectrum norm of a matrix $A$. The step (b) is based on matrix norm inequality $\|A\|_2\leq\|A\|_F$, where $\|A\|_F=(\sum_{1\leq i,j\leq n}A_{ij}^2)^{1/2}$ is the Frobenius norm of matrix $A$. The step (c) is based on the results of (\ref{eq:submatrix_jl}).
\end{proof}
\end{document}